\title{Smooth Approximations of the Rounding Function}
\author{Stanislav Semenov \\
\href{mailto:stas.semenov@gmail.com}{stas.semenov@gmail.com} \\
\href{https://orcid.org/0000-0002-5891-8119}{ORCID: 0000-0002-5891-8119}}
\date{April 26, 2025}
\theoremstyle{definition}
\newtheorem{definition}{Definition}[section]
\theoremstyle{plain}
\newtheorem{proposition}[definition]{Proposition}
\theoremstyle{remark}
\begin{document}

\maketitle

\begin{abstract}
We propose novel smooth approximations to the classical rounding function, suitable for differentiable optimization and machine learning applications. Our constructions are based on two approaches: (1) localized sigmoid window functions centered at each integer, and (2) normalized weighted sums of sigmoid derivatives representing local densities. The first method approximates the step-like behavior of rounding through differences of shifted sigmoids, while the second method achieves smooth interpolation between integers via density-based weighting. Both methods converge pointwise to the classical rounding function as the sharpness parameter \( k \to \infty \), and allow controlled trade-offs between smoothness and approximation accuracy. We demonstrate that by restricting the summation to a small set of nearest integers, the computational cost remains low without sacrificing precision. These constructions provide fully differentiable alternatives to hard rounding, which are valuable in contexts where gradient-based methods are essential.
\end{abstract}

\subsection*{Mathematics Subject Classification}
03F60 (Constructive and recursive analysis), 26E40 (Constructive analysis)

\subsection*{ACM Classification}
F.4.1 Mathematical Logic, F.1.1 Models of Computation

\section{Introduction}

Rounding real-valued data to the nearest integer is a fundamental operation that appears across a wide range of fields, including numerical analysis, signal processing, computer graphics, machine learning, and optimization. In many practical applications, rounding is required to enforce discrete constraints, quantize continuous signals, or map continuous outputs onto a finite set of discrete choices.

Despite its simplicity, the classical rounding function is inherently discontinuous, introducing non-differentiabilities that pose significant challenges in modern computational frameworks. In particular, tasks involving gradient-based optimization --- such as training neural networks, solving relaxed combinatorial problems, or implementing differentiable programming pipelines --- require smooth approximations of non-smooth operations~\cite{Goodfellow2016,Bishop2006}. The inability to backpropagate through hard rounding severely limits the applicability of classical methods in these contexts.

Traditional approaches to mitigate this issue include the use of stochastic rounding, softmax relaxations, or smooth approximations based on the sigmoid or hyperbolic tangent functions~\cite{Rudin1976}. For example, a common trick is to apply a scaled sigmoid function to approximate the floor or ceiling operations, effectively introducing a soft threshold. Similarly, soft quantization techniques replace hard discretization with continuous interpolations. However, many of these methods lack precise control over the sharpness of the transition, struggle to localize the approximation near integers, or introduce biases that degrade performance in precision-sensitive tasks.

In this work, we propose two systematic constructions of smooth rounding functions that address these limitations. The first is based on localized differences of sigmoid functions centered around each integer, producing smoothed step-like transitions. The second is based on normalized weighted sums of sigmoid derivatives, interpreting the rounding as a continuous interpolation between neighboring integers according to local density contributions. Both methods are fully differentiable, allow explicit control over the transition sharpness via a parameter \( k \), and converge to classical rounding behavior as \( k \to \infty \). Furthermore, we show that by restricting computation to a small number of nearest integers, we can retain computational efficiency without sacrificing approximation quality.

Our proposed smooth rounding schemes are particularly well-suited for applications that demand end-to-end differentiability, including differentiable rendering, discrete latent variable modeling, smooth quantization, differentiable combinatorial optimization, and relaxed integer programming.

\subsection*{Summary of Contributions}

In this paper, we make the following contributions:
\begin{itemize}
    \item We propose a fully differentiable smooth approximation to the classical rounding function, constructed via localized differences of sigmoid functions without relying on hard operations like \(\mathrm{floor}(x)\) or \(\mathrm{ceil}(x)\).

    \item We introduce an alternative smooth rounding construction via normalized weighted sums of sigmoid derivatives, enabling continuous interpolation between neighboring integers based on local density contributions.
    
    \item We show that both methods converge pointwise to the classical rounding function as the sharpness parameter \( k \to \infty \), providing tunable control over the trade-off between smoothness and approximation accuracy.
    
    \item We demonstrate that computational efficiency is achieved by restricting attention to a small number of nearest integers without significant loss of precision, making the methods practical for large-scale and real-time applications.
    
    \item We highlight potential applications in differentiable programming, smooth quantization, discrete optimization relaxations, and neural network architectures requiring end-to-end differentiability.
\end{itemize}

\section{Rounding Functions and Smooth Approximations}

In this section, we describe the classical rounding function and two smooth approximations proposed in this work. We highlight their construction principles, mathematical properties, and computational characteristics.

\subsection{Classical Rounding}

The classical rounding operation maps each real number \( x \in \mathbb{R} \) to its nearest integer:
\[
\mathrm{round}(x) := \text{nearest integer to } x.
\]
This mapping exhibits a sharp discontinuous jump at every half-integer point \( x = n + 0.5 \), where \( n \in \mathbb{Z} \).  
Although simple and exact, the classical rounding function is non-differentiable, making it unsuitable for applications that require smoothness and gradient information.

\textbf{Key properties:}
\begin{itemize}
    \item Sharp discontinuities at half-integers.
    \item Exact integer mapping.
    \item Non-differentiable everywhere.
    \item Time complexity: \( O(N) \).
    \item Space complexity: \( O(N) \).
\end{itemize}

\subsection{Smooth Rounding via Sigmoid Differences}

To obtain a differentiable approximation of rounding, we first introduce a method based on localized sigmoid windows.  
For each integer \( n \), we define a soft window function by taking the difference of two shifted sigmoids:
\[
\mathrm{round}_\sigma(x, k) := \sum_{n} n \left[ \sigma\left(k(x - (n-0.5))\right) - \sigma\left(k(x - (n+0.5))\right) \right],
\]
where \( \sigma(z) = (1 + e^{-z})^{-1} \) is the standard sigmoid function, and \( k > 0 \) controls the sharpness of the transition.

In this construction, each \( n \) contributes significantly only when \( x \) is close to \( n \), within approximately one unit.

\textbf{Key properties:}
\begin{itemize}
    \item Smooth approximation of a step function around each integer.
    \item Sharp but smoothed transitions near half-integers.
    \item Converges pointwise to \(\mathrm{round}(x)\) as \( k \to \infty \).
    \item Time complexity: \( O(NM) \) with \( M \sim 5 \) nearest neighbors.
    \item Space complexity: \( O(N) \).
\end{itemize}

\subsection{Normalized Smooth Rounding via Sigmoid Derivatives}

Alternatively, we propose a smoother and more globally consistent approximation by considering normalized weighted sums of local sigmoid derivatives.  
The local contribution of each integer \( n \) is defined by:
\[
\rho_n(x) := k \, \sigma(k(x-n))(1-\sigma(k(x-n))),
\]
which forms a bell-shaped density around \( n \).  
The smooth rounding function is then given by:
\[
\mathrm{round}_{\text{norm}}(x, k) := \frac{ \sum_n n \rho_n(x) }{ \sum_n \rho_n(x) }.
\]

Here, each integer \( n \) contributes proportionally to its local density, and the final result is a weighted average of nearby integers.  
This method naturally ensures that the function is fully smooth and that the weights sum to one.

\textbf{Key properties:}
\begin{itemize}
    \item Fully smooth and differentiable.
    \item Continuous interpolation between neighboring integers.
    \item Converges pointwise to \(\mathrm{round}(x)\) as \( k \to \infty \).
    \item Time complexity: \( O(NM) \) with \( M \sim 5 \) nearest neighbors.
    \item Space complexity: \( O(N) \).
\end{itemize}

\subsection{Comparison of Methods}

The following table summarizes the principal characteristics of the classical and smooth rounding functions:

\begin{table}[ht]
\centering
\resizebox{\textwidth}{!}{
\begin{tabular}{|c|c|c|c|}
\hline
\textbf{Property} & \textbf{Classical Round} & \textbf{Sigmoid Differences} & \textbf{Normalized Derivative} \\
\hline
Continuity & Discontinuous & Smooth (piecewise) & Fully smooth \\
\hline
Sharpness at Half-Integers & Infinite jump & Smoothed jump & Smooth transition \\
\hline
Mechanism & Hard rounding & Window via sigmoid differences & Weighted average via densities \\
\hline
Limit as \( k \to \infty \) & Exact & Exact & Exact \\
\hline
Typical Neighbors Needed & 0 (instant) & 5 neighbors & 5 neighbors \\
\hline
Time Complexity & \( O(N) \) & \( O(NM) \) & \( O(NM) \) \\
\hline
Space Complexity & \( O(N) \) & \( O(N) \) & \( O(N) \) \\
\hline
\end{tabular}
}
\caption{Comparison of classical rounding, sigmoid-difference smoothing, and normalized derivative smoothing.}
\label{tab:rounding_comparison}
\end{table}

Both smooth approximations maintain full differentiability and converge to the classical rounding function as the sharpness parameter increases.
However, they differ in how transitions between integers are handled: the sigmoid-difference method produces sharp but smoothed boundaries near half-integer points, whereas the normalized-derivative method achieves a fully continuous and gradual interpolation across the entire real line.

\section{Analysis of Smooth Rounding Functions}

In this section, we analyze the key mathematical properties of the proposed smooth rounding functions: convergence to classical rounding, smoothness, and computational efficiency.

\subsection{Pointwise Convergence as \texorpdfstring{\( k \to \infty \)}{k → ∞}}

Both smooth approximations are parameterized by a sharpness parameter \( k > 0 \) that controls the steepness of transitions.  
We show that as \( k \to \infty \), both functions converge pointwise to the classical rounding function.

\begin{proposition}
Let \( x \in \mathbb{R} \) be arbitrary. Then:
\[
\lim_{k \to \infty} \mathrm{round}_\sigma(x,k) = \mathrm{round}(x),
\quad
\lim_{k \to \infty} \mathrm{round}_{\text{norm}}(x,k) = \mathrm{round}(x).
\]
\end{proposition}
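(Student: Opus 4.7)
The plan is to treat the two approximations separately, in each case reducing pointwise convergence of a bi-infinite sum to a dominated-convergence argument with an explicit, $k$-uniform exponential majorant.

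For $\mathrm{round}_\sigma$ I would fix a non-half-integer $x$ and let $m = \mathrm{round}(x)$. For each fixed $n$, the term
\[
s_n(k) := \sigma\bigl(k(x - n + \tfrac12)\bigr) - \sigma\bigl(k(x - n - \tfrac12)\bigr)
\]
tends to $1$ if $n - \tfrac12 < x < n + \tfrac12$ (i.e.\ $n = m$) and to $0$ otherwise, because each sigmoid limits to the indicator of the positivity of its argument, and at most one integer $n$ satisfies the two-sided inequality. To push the limit through the sum over $n$ I would use the elementary inequalities $\sigma(z) \le e^{z}$ for $z \le 0$ and $1 - \sigma(z) \le e^{-z}$ for $z \ge 0$ to obtain, uniformly in $k \ge 1$,
\[
|n\, s_n(k)| \;\le\; 2|n|\, e^{-(|x - n| - 1/2)} \qquad (|n - x| \ge 1),
\]
which is summable in $n \in \mathbb{Z}$. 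Dominated convergence then yields $\mathrm{round}_\sigma(x,k) \to m$.

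For $\mathrm{round}_{\mathrm{norm}}$ I would divide numerator and denominator by the central weight $\rho_m(x)$, so that the $m$-th term becomes $1$ and the others become $\sigma'(k(x-n))/\sigma'(k(x-m))$. Using $\sigma'(z) = (e^{z/2} + e^{-z/2})^{-2}$, the simple two-sided estimate $e^{|z|/2} \le e^{z/2} + e^{-z/2} \le 2 e^{|z|/2}$, and the evenness of $\sigma'$, I would derive
\[
\frac{\rho_n(x)}{\rho_m(x)} \;\le\; 4\, e^{-k\bigl(|x - n| - |x - m|\bigr)}.
\]
For $x$ not at a half-integer and $n \ne m$, the exponent is bounded below by $2\delta_x > 0$, where $\delta_x = \tfrac12 - |x - m|$, and for $|n - m| \ge 2$ it grows linearly in $|n - m|$. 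Hence each off-center ratio tends to $0$, and the bounds are geometric in $|n - m|$ for any fixed large $k$, allowing termwise limits in both the numerator and the denominator. The denominator tends to $1$, the numerator to $m$, and the quotient to $m = \mathrm{round}(x)$.

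The principal technical obstacle in both cases is the interchange of $\lim_{k\to\infty}$ with the bi-infinite sum over $n$; once the uniform exponential tail estimates above are in hand, the pointwise convergence of individual terms is immediate. I would also flag up front the mild caveat that at half-integers $x = N + \tfrac12$ both approximations converge to $N + \tfrac12$ rather than to a designated integer, so the statement should be read for $x \notin \mathbb{Z} + \tfrac12$ (or, equivalently, with the convention that classical rounding leaves half-integers fixed at their midpoint).
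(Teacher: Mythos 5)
Your proposal is correct, and its core mechanism is the same as the paper's: each shifted sigmoid tends to a step function, so the window \(s_n(k)\) survives only for \(n=\mathrm{round}(x)\), and the densities \(\rho_n\) concentrate at the nearest integer so the weighted average collapses to it. The paper, however, offers only a two-sentence sketch and silently interchanges \(\lim_{k\to\infty}\) with the bi-infinite sum over \(n\); your version supplies exactly the missing ingredient, namely \(k\)-uniform exponential majorants (\(\lvert n\,s_n(k)\rvert \le 2\lvert n\rvert e^{-(\lvert x-n\rvert-1/2)}\) for the difference method, and the ratio bound \(\rho_n(x)/\rho_m(x)\le 4e^{-k(\lvert x-n\rvert-\lvert x-m\rvert)}\) after normalizing by the central weight) that legitimize the termwise passage to the limit via dominated convergence. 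Both estimates check out: \(\sigma(z)\le e^{z}\) for \(z\le 0\), \(1-\sigma(z)\le e^{-z}\) for \(z\ge 0\), and \(\sigma'(z)=(e^{z/2}+e^{-z/2})^{-2}\) give precisely what you claim. You also catch a point the paper glosses over: at \(x=N+\tfrac12\) both approximations converge to \(N+\tfrac12\) rather than to an integer (the two adjacent windows each contribute weight \(\tfrac12\), and \(\rho_N=\rho_{N+1}\) by symmetry), so the proposition as stated is false for half-integers and must be read for \(x\notin\mathbb{Z}+\tfrac12\). In short, your argument buys a complete proof where the paper has a heuristic, at the cost of some explicit tail estimates; nothing in your outline would fail when written out in full.
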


\begin{proof}[Sketch of Proof]
For the sigmoid-difference method, as \( k \to \infty \), the sigmoids \(\sigma(k(x-(n-0.5)))\) and \(\sigma(k(x-(n+0.5)))\) become step functions centered at \(n-0.5\) and \(n+0.5\) respectively, producing a sharp window around each integer.  
Thus, the sum selects the nearest integer.

For the normalized-derivative method, the densities \(\rho_n(x)\) concentrate sharply at the nearest integer as \(k \to \infty\), and the weighted average converges to the corresponding integer value.
\end{proof}

\subsection{Smoothness and Differentiability}

Both smooth rounding functions are infinitely differentiable for any finite \(k\), since they are constructed from compositions and sums of sigmoid functions, which are themselves \( C^\infty \).

Moreover, the gradients with respect to \( x \) can be computed explicitly:

- For the sigmoid-difference method:
\[
\frac{d}{dx} \mathrm{round}_\sigma(x,k) = k \sum_{n} n \left[ \sigma'\left(k(x-(n-0.5))\right) - \sigma'\left(k(x-(n+0.5))\right) \right].
\]
- For the normalized-derivative method, differentiation involves applying the quotient rule:
\[
\frac{d}{dx} \mathrm{round}_{\text{norm}}(x,k) = \frac{ \sum_n n \rho'_n(x) \cdot \sum_n \rho_n(x) - \sum_n n \rho_n(x) \cdot \sum_n \rho'_n(x) }{ \left( \sum_n \rho_n(x) \right)^2 },
\]
where \( \rho'_n(x) = k^2 (1-2\sigma(k(x-n))) \sigma(k(x-n))(1-\sigma(k(x-n))) \).

These expressions confirm that the functions are fully differentiable and suitable for gradient-based methods.

\subsection{Computational Considerations}

Although the theoretical definitions involve infinite sums over \( n \in \mathbb{Z} \), in practice, the contribution of distant integers decays exponentially fast with \( k \).  
Specifically, for large \( k \), the local densities \(\rho_n(x)\) or the differences of sigmoids become negligible outside a small neighborhood of \( x \).

Thus, it is sufficient to restrict the summations to a fixed small number of nearest integers, typically \(5\) or fewer, without significant loss of accuracy.

This ensures that the computational complexity per evaluation remains \( O(M) \) with \( M \sim 5 \), leading to an overall complexity \( O(NM) \) for \( N \) input values.

\subsection{Summary of Analytical Properties}

\begin{itemize}
    \item Both methods converge pointwise to classical rounding as \(k \to \infty\).
    \item Both methods are \( C^\infty \) smooth for any finite \(k\).
    \item Gradients can be computed explicitly, enabling backpropagation.
    \item Efficient evaluation is achieved by localizing computation to a small set of neighboring integers.
\end{itemize}

\section*{Conclusion}

We have proposed two smooth approximations to the classical rounding function, based on localized differences of sigmoid functions and normalized weighted sums of sigmoid derivatives. Both constructions achieve fully differentiable behavior across the real line and converge pointwise to classical rounding as the sharpness parameter increases. 

A key advantage of our approach is that it entirely avoids the use of non-differentiable operations such as \(\mathrm{floor}(x)\) or \(\mathrm{ceil}(x)\), relying solely on smooth analytic functions. This makes the methods particularly suitable for applications requiring end-to-end differentiability, such as differentiable programming, optimization, and machine learning.

\newpage
\appendix

\section{Appendix: Python Implementation}

In this appendix, we provide simple Python code that implements the proposed smooth rounding functions and compares them to the classical rounding.

\begin{verbatim}
import numpy as np
import matplotlib.pyplot as plt

def sigma(z):
    return 1 / (1 + np.exp(-z))

def smooth_round_precise(x, k):
    result = np.zeros_like(x)
    for i in range(len(x)):
        xi = x[i]
        n0 = np.floor(xi)
        neighbors = np.array([n0-1, n0, n0+1, n0+2])
        rhos = (
            k * sigma(k * (xi - neighbors)) *
            (1 - sigma(k * (xi - neighbors)))
        )
        numerator = np.sum(neighbors * rhos)
        denominator = np.sum(rhos) + 1e-8
        result[i] = numerator / denominator
    return result

def classical_round(x):
    return np.round(x)

def round_sigma_original(x, k):
    result = np.zeros_like(x)
    for i in range(len(x)):
        xi = x[i]
        n_range = np.arange(np.floor(xi) - 2, np.floor(xi) + 3)
        for n in n_range:
            result[i] += n * (
                sigma(k * (xi - (n - 0.5)))
                - sigma(k * (xi - (n + 0.5)))
            )
    return result

def round_sigma_derivative_normalized(x, k):
    numerator = np.zeros_like(x)
    denominator = np.zeros_like(x)
    for i in range(len(x)):
        xi = x[i]
        n_range = np.arange(np.floor(xi) - 2, np.floor(xi) + 3)
        for n in n_range:
            rho = k * sigma(k * (xi - n)) * (1 - sigma(k * (xi - n)))
            numerator[i] += n * rho
            denominator[i] += rho
    denominator = np.maximum(denominator, 1e-8)
    return numerator / denominator

k = 10
x = np.linspace(-1, 1, 1000)

y_classical = classical_round(x)
y_original = round_sigma_original(x, k)
y_normalized = round_sigma_derivative_normalized(x, k)

plt.figure(figsize=(14, 8))
plt.plot(
    x, y_classical,
    label='Classical round(x)',
    linestyle='-',
    linewidth=2
)
plt.plot(
    x, y_original,
    label='Original round_sigma(x,k)',
    linestyle='--',
    linewidth=2
)
plt.plot(
    x, y_normalized,
    label='Normalized derivative approximation',
    linestyle='-.',
    linewidth=2
)
plt.title(f'Comparison of Rounding Functions (k={k})')
plt.xlabel('x')
plt.ylabel('Value')
plt.grid(True)
plt.legend()
plt.show()
\end{verbatim}

As illustrated in Figure~\ref{fig:smooth_rounding}, the plots generated by this code show the behavior of the classical rounding function, the smooth rounding via sigmoid differences, and the smooth rounding via normalized sigmoid derivatives.

\bigskip

Note: To maintain numerical stability and prevent division by very small numbers, a small constant (\(10^{-8}\)) is added to the denominator in the normalized method.

\begin{figure}[ht]
\centering
\includegraphics[width=0.8\textwidth]{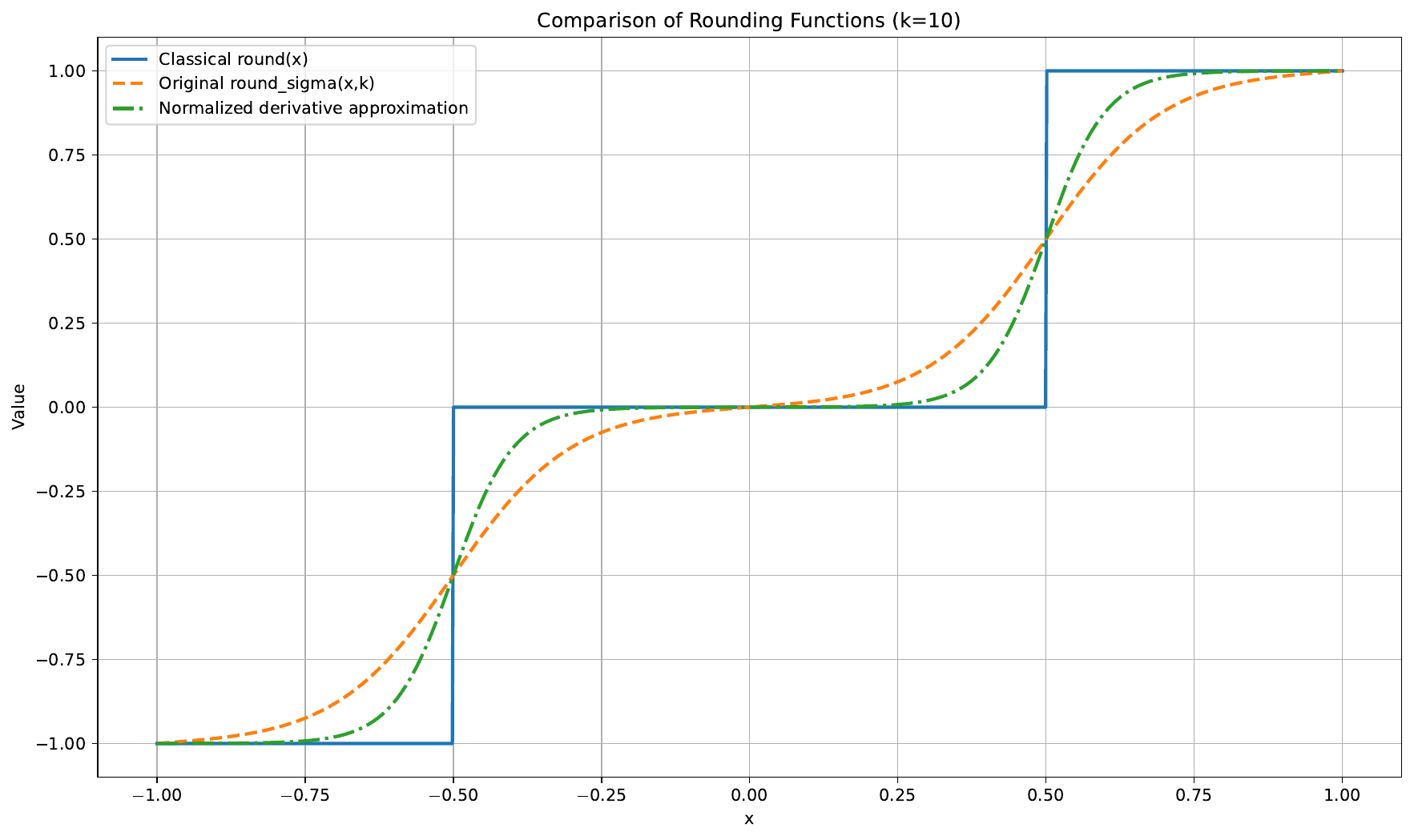}
\caption{Comparison of classical rounding and smooth approximations (\(k=10\)).}
\label{fig:smooth_rounding}
\end{figure}

\end{document}